\newcommand{\be}{\begin{equation}}
\newcommand{\ee}{\end{equation}}
\newcommand{\ba}{\begin{array}}
\newcommand{\ea}{\end{array}}
\newcommand{\bea}{\begin{eqnarray}}
\newcommand{\eea}{\end{eqnarray}}
\newcommand{\beaa}{\begin{eqnarray*}}
\newcommand{\eeaa}{\end{eqnarray*}}
\newcommand{\br}{\mathbb{R}}
\newcommand{\sgn}{\mathrm{sgn}}
\newcommand{\etal}{{et al. }}
\newcommand{\Diag}{\mbox{Diag}}
\newtheorem{theorem}{Theorem}[section]
\newtheorem{lemma}[theorem]{Lemma}
\begin{document}

\title{Sparse Inverse Covariance Selection  via \\  Alternating Linearization Methods}
\author{Katya Scheinberg \\ Department of ISE \\ Lehigh University \\ \texttt{katyas@lehigh.edu} \\
\And Shiqian Ma, \quad Donald Goldfarb \\ Department of IEOR \\ Columbia University \\ \texttt{\{sm2756,goldfarb\}@columbia.edu}}

%\And Shiqian Ma\footnotemark[1], \quad Donald Goldfarb\footnotemark[1] \\ Department of IEOR \\ Columbia University \\ \texttt{\{sm2756,goldfarb\}@columbia.edu}}

%\renewcommand{\thefootnote}{\fnsymbol{footnote}}
%\footnotetext[1]{Research supported in
%part by NSF Grants DMS 06-06712 and DMS 10-16571, ONR Grant
%N00014-08-1-1118 and DOE Grant DE-FG02-08ER25856. }

\maketitle

\begin{abstract} Gaussian graphical models are of great interest in statistical learning. Because the conditional independencies between different nodes correspond to zero entries in the inverse covariance matrix of the Gaussian distribution, one can learn the structure of the graph by estimating a sparse inverse covariance matrix from sample data, by solving a convex maximum likelihood problem with an $\ell_1$-regularization term.
%Many numerical algorithms have been proposed for solving this problem, but they either lack strong convergence results such as an iteration complexity bound, or are
%not efficient for large problems.
In this paper, we propose a first-order
method based on an alternating linearization technique that exploits the problem's special structure; in particular, the subproblems solved in each iteration have closed-form solutions. Moreover, our algorithm obtains an $\epsilon$-optimal solution in $O(1/\epsilon)$ iterations. Numerical experiments on both synthetic  and real data from gene association networks show that a {practical version of
 this} algorithm outperforms other competitive algorithms.
\end{abstract}

\section{Introduction} \label{sec:intro}
In multivariate data analysis, graphical models such as Gaussian Markov Random Fields provide a way to discover meaningful interactions among variables. Let $Y=\{y^{(1)},\ldots,y^{(n)}\}$ be an $n$-dimensional random vector following an $n$-variate Gaussian distribution $\mathcal{N}(\mu,\Sigma)$, and let
$G=(V,E)$ be a Markov network representing the conditional
independence structure of $\mathcal{N}(\mu,\Sigma)$. Specifically, the set of vertices $V=\{1,\ldots,n\}$ corresponds to the set of variables in $Y$, and the edge set $E$
contains an edge $(i,j)$ if and only if $y^{(i)}$ is conditionally
dependent on $y^{(j)}$ given all remaining variables; i.e., the lack of an
edge between $i$ and $j$ denotes the conditional independence of $y^{(i)}$ and $y^{(j)}$,
which corresponds to a zero entry in the inverse covariance matrix $\Sigma^{-1}$ (\cite{Lauritzen-1996}).
Thus learning the structure of this graphical model is equivalent to the problem of learning the zero-pattern of $\Sigma^{-1}$.
To estimate this sparse inverse covariance matrix, one can solve the following sparse inverse covariance selection (SICS) problem:
$ \max_{X\in S^n_{++}} \log\det(X) - \langle\hat\Sigma, X\rangle - \rho \|X\|_0,$
where $S^n_{++}$ denotes the set of $n\times n$ positive definite matrices, $\|X\|_0$ is the number of nonzeros in $X$,
$\hat\Sigma = \frac{1}{p}\sum_{i=1}^p (Y_i-\hat\beta)(Y_i-\hat\beta)^\top$ is the sample covariance matrix, $\hat\beta=\frac{1}{p}\sum_{i=1}^p Y_i$ is the sample mean and $Y_i$ is the $i$-th random sample of $Y$. This problem is NP-hard in general due to the combinatorial nature of the cardinality term $\rho\|X\|_0$ (\cite{Natarajan-95}). To get a numerically tractable problem, {one can replace the cardinality term $\|X\|_0$ by $\|X\|_1:=\sum_{i,j}|X_{ij}|$, the envelope of $\|X\|_0$ over the set $\{X\in\br^{n\times n}: \|X\|_\infty \leq 1\}$ }(see
\cite{Hiriart-Urruty-Lemarechal-1993}). This results in the convex optimization problem (see e.g., \cite{Yuan-Lin-2007,Friedman-Hastie-Tibshirani-2007,Wainwright-Ravikumar-Lafferty-2007,Banerjee-ElGhaoui-Aspremont-2008}):
\bea\label{prob:sics-L1}\min_{X\in S^n_{++}} \quad -\log\det(X) +
\langle\hat\Sigma, X\rangle + \rho \|X\|_1.\eea
Note that \eqref{prob:sics-L1} can be rewritten as $ \min_{X\in S^n_{++}}\max_{\|U\|_\infty\leq\rho}-\log\det X+\langle\hat\Sigma+U,X\rangle,$ where $\|U\|_\infty$ is the largest absolute value of the entries of $U$. By exchanging the order of max and min, we obtain the dual problem
$\max_{\|U\|_\infty\leq\rho}\min_{X\in S^n_{++}}-\log\det X+\langle\hat\Sigma+U,X\rangle,$ which is equivalent to
\bea\label{prob:sics-L1-dual}\max_{W\in S^n_{++}} \{ \log\det W +n :  \|W-\hat\Sigma\|_\infty\leq\rho\}.\eea
Both the primal and dual problems have strictly convex objectives; hence, their optimal solutions are unique. Given a dual solution $W$, $X=W^{-1}$ is primal feasible resulting in the duality gap
\bea\label{def:gap-Winv} gap:=\langle\hat\Sigma,W^{-1}\rangle+\rho\|W^{-1}\|_1-n.\eea

The primal and the dual SICS problems \eqref{prob:sics-L1} and \eqref{prob:sics-L1-dual}
    are semidefinite programming problems and can be solved via interior point methods (IPMs) in polynomial time. However, the per-iteration computational cost and memory
 requirements of an IPM are prohibitively high for the SICS problem. Although an approximate IPM has recently been proposed for the SICS problem \cite{Li-Toh-2010},  most of the methods developed for it are first-order methods.
  Banerjee \etal \cite{Banerjee-ElGhaoui-Aspremont-2008} proposed a block coordinate descent (BCD) method to solve the dual problem \eqref{prob:sics-L1-dual}. Their method updates one row and one column of $W$ in each iteration by solving a convex quadratic programming problem by an IPM. The $glasso$ method of Friedman \etal \cite{Friedman-Hastie-Tibshirani-2007} is  based on the same BCD approach as in \cite{Banerjee-ElGhaoui-Aspremont-2008}, but it solves each subproblem as a
LASSO problem by yet another coordinate descent (CD) method \cite{Tibshirani-LASSO-1996}. Sun \etal \cite{Sun-KDD09-SICS}
proposed solving the primal problem \eqref{prob:sics-L1} by using a BCD method. They formulate
the subproblem as a min-max problem and solve it using a prox method proposed by Nemirovski
\cite{Nemirovski-Prox-siopt-2005}. The SINCO method proposed by Scheinberg and Rish \cite{Scheinberg-Rish-2009} is a greedy CD method applied to the primal problem. All of these BCD and CD approaches lack
iteration complexity bounds. They also have been shown to be inferior in practice to gradient based
approaches. A projected gradient method for solving the dual problem \eqref{prob:sics-L1-dual} that is considered to be state-of-the-art for SICS was proposed by
 Duchi \etal \cite{Duchi-UAI-2008}. However, there are no iteration complexity results for it either.
Variants of Nesterov's method \cite{Nesterov-2005,NesterovConvexBook2004}
 have been applied to solve the SICS problem. % by d'Aspremont \etal \cite{Aspremont-Banerjee-ElGhaoui-2008} and Lu \cite{Lu-covsel-siopt-2009}.
 d'Aspremont \etal \cite{Aspremont-Banerjee-ElGhaoui-2008} applied
Nesterov's optimal first-order method to solve the primal problem \eqref{prob:sics-L1} after
smoothing the nonsmooth $\ell_1$ term, obtaining an iteration complexity bound of $O(1/\epsilon)$ for an $\epsilon$-optimal solution, but the implementation in \cite{Aspremont-Banerjee-ElGhaoui-2008} was very slow and did not produce good results. Lu \cite{Lu-covsel-siopt-2009} solved the dual problem \eqref{prob:sics-L1-dual}, which
is a smooth problem, by Nesterov's algorithm, and improved the iteration complexity to $O(1/\sqrt{\epsilon})$. However, since the practical performance of this algorithm was not attractive, Lu gave a variant (VSM) of it that exhibited better performance. The iteration complexity of VSM is unknown. Yuan \cite{Yuan-2009} proposed an alternating direction method based on an augmented Lagrangian framework {(see the ADAL method \eqref{alg:ADAL} below)}. This method also lacks complexity results. The proximal point algorithm proposed by Wang \etal in \cite{Wang-Sun-Toh-2009} requires a reformulation of the problem that increases the size of the problem making it impractical for solving large-scale problems. Also, there is no iteration complexity bound for this algorithm. The IPM in \cite{Li-Toh-2010} also requires such a reformulation.

%While Sun \etal applied the same idea to solve the primal SICS problem which will keep the sparsity structure of the resulting matrix \cite{Sun-KDD09-SICS}. Variants of Nesterov's optimal first-order method \cite{Nesterov-2005,NesterovConvexBook2004} are applied to solving the primal problem by d'Aspremont \etal \cite{Aspremont-Banerjee-ElGhaoui-2008} and Lu \cite{Lu-covsel-siopt-2009}. d'Aspremont \etal \cite{Aspremont-Banerjee-ElGhaoui-2008} applied Nesterov's original algorithm by first smoothing the $\ell_1$ norm by a smoothed approximation. The iteration complexity of this algorithm is $O(1/\epsilon)$ for finding an $\epsilon$-optimal solution. Lu solved the dual problem of SICS, which is a smooth problem, by Nesterov's algorithm, and improved the iteration complexity to $O(1/\sqrt{\epsilon})$.

%However, since the practical performance is not attractive, Lu gave a variant of Nesterov's algorithm (VSM) to improve the performance. But the iteration complexity of VSM is unknown.

{\bf Our contribution.} In this paper, we propose an alternating linearization method (ALM) for solving the primal SICS problem.
An advantage of solving the primal problem is that the $\ell_1$ penalty term in the objective function directly promotes sparsity in
the optimal inverse covariance matrix.
% If the dual problem is solved, the optimal covariance matrix that is obtained must be
%inverted to obtain the desired inverse covariance matrix. However, this inverted matrix is
%usually not sparse due to the floating point errors that occur during the inversion.

 Although developed independently, our method is closely related to
Yuan's method  \cite{Yuan-2009}. Both methods exploit the special form of the primal problem
 \eqref{prob:sics-L1} by alternatingly minimizing one of the terms of the objective  {function}
plus an  approximation to the other  term. The main difference between the two methods is in the construction of these approximations. As we will show, our method has a
theoretically justified interpretation and is based on an algorithmic framework with complexity
bounds, while no complexity bound is available for Yuan's method. Also our method has an intuitive interpretation from a learning perspective.
 Extensive numerical test results on both synthetic data and real problems have shown that our ALM algorithm significantly outperforms other existing algorithms, such as the PSM algorithm proposed by Duchi \etal \cite{Duchi-UAI-2008} and the VSM algorithm proposed by Lu \cite{Lu-covsel-siopt-2009}. Note that it is shown in \cite{Duchi-UAI-2008} and \cite{Lu-covsel-siopt-2009} that PSM and VSM outperform the BCD method in \cite{Banerjee-ElGhaoui-Aspremont-2008} and $glasso$ in \cite{Friedman-Hastie-Tibshirani-2007}.

{\bf Organization of the paper.} In Section 2 we briefly review alternating linearization methods for minimizing the sum of two convex functions and establish convergence and iteration complexity results.
We show how to use ALM to solve SICS problems and give intuition from a learning perspective in Section 3. Finally, we present some numerical results on both synthetic and real data in Section 4 and compare ALM with PSM algorithm \cite{Duchi-UAI-2008} and VSM algorithm \cite{Lu-covsel-siopt-2009}.

\section{Alternating Linearization Methods} \label{sec:alm}
We consider here the alternating linearization method (ALM) for solving the following problem: \bea\label{prob:min-sum-two} \min \quad F(x)
\equiv f(x) + g(x), \eea where $f$ and $g$ are both convex functions.  An effective way to solve \eqref{prob:min-sum-two} is to ``split'' $f$ and $g$ by introducing a new variable, i.e., to rewrite \eqref{prob:min-sum-two} as
\bea\label{prob:min-sum-two-X-Y}\min_{x,y} \{ f(x) + g(y) :  x-y=0 \},\eea and apply an alternating direction augmented Lagrangian method to it. Given a penalty parameter
 $1/\mu$, at the $k$-th iteration, the augmented Lagrangian method minimizes the augmented Lagrangian function $$\mathcal{L}(x,y;\lambda):=f(x)+g(y)-\langle\lambda,x-y\rangle+\frac{1}{2\mu}\|x-y\|_2^2,$$ with respect to $x$ and $y$, i.e., it solves the subproblem \bea\label{prob:min-Aug-Lag-XY-nonsmooth}(x^{k},y^{k}):=\arg\min_{x,y} \mathcal{L}(x,y;\lambda^k),\eea and updates the Lagrange multiplier $\lambda$ via:
\bea\label{update-Lambda}\lambda^{k+1}:=\lambda^k-(x^{k}-y^{k})/\mu.\eea Since minimizing $\mathcal{L}(x,y;\lambda)$ with respect to $x$ and $y$ jointly is usually difficult, while doing so with respect to $x$ and $y$ alternatingly can often be done efficiently, the following alternating direction version of the augmented Lagrangian method (ADAL) is often advocated (see, e.g., \cite{Fortin-Glowinski-1983,Glowinski-LeTallec-89}):
\bea\label{alg:ADAL}\left\{\ba{ll}x^{k+1} & :=\arg\min_x \mathcal{L}(x,y^k;\lambda^k) \\ y^{k+1} & := \arg\min_y \mathcal{L}(x^{k+1},y;\lambda^k) \\ \lambda^{k+1} & := \lambda^k - (x^{k+1}-y^{k+1})/\mu. \ea\right.\eea

If we also update
$\lambda$ after we solve the subproblem with respect to $x$, we get the
following symmetric version of the ADAL method.
\bea\label{alg:sym-ADAL}\left\{\ba{ll} x^{k+1} & :=
\arg\min_x \mathcal{L}(x,y^k;\lambda_y^k) \\
                                       \lambda_x^{k+1} &:=\lambda_y^k-(x^{k+1}-y^k)/\mu \\
                                       y^{k+1} & := \arg\min_y \mathcal{L}(x^{k+1},y;\lambda_x^{k+1}) \\
                                       \lambda_y^{k+1} & := \lambda_x^{k+1} -(x^{k+1}-y^{k+1})/\mu. \ea\right.\eea
Algorithm \eqref{alg:sym-ADAL} has certain theoretical advantages when $f$ and $g$ are smooth. In this case, from the first-order optimality conditions for the two subproblems in
\eqref{alg:sym-ADAL}, we have that:
\bea\label{ADAL-opt-cond}\lambda_x^{k+1}=\nabla f(x^{k+1}) \quad \mbox{and} \quad \lambda_y^{k+1} = -\nabla g(y^{k+1}).\eea Substituting these relations into \eqref{alg:sym-ADAL}, we obtain the following equivalent  algorithm for solving \eqref{prob:min-sum-two}, which we refer to as the alternating linearization minimization (ALM) algorithm.
\begin{algorithm}[tbh!]\caption{Alternating linearization method (ALM) for smooth problem}\label{alg:ALM-general}
\begin{algorithmic}
\STATE {\bfseries Input:} $x^0=y^0$
\FOR{$k=0,1,\cdots$}
\STATE 1. Solve $ x^{k+1}:= \arg\min_{x} Q_g(x,y^k) \equiv f(x) + g(y^k) + \left\langle \nabla g(y^k), x-y^k \right\rangle + \frac{1}{2\mu}\|x-y^k\|_2^2$;
\STATE 2. Solve $ y^{k+1}:= \arg\min_{y} Q_f(x^{k+1},y) \equiv f(x^{k+1}) + \left\langle \nabla f(x^{k+1}), y-x^{k+1} \right\rangle + \frac{1}{2\mu}\|y-x^{k+1}\|_2^2 + g(y) $;
\ENDFOR
\end{algorithmic}
\end{algorithm}

Algorithm \ref{alg:ALM-general} can be viewed in the following way: at each iteration  we
construct a quadratic approximation of the function $g(x)$ at the current iterate $y^k$
and minimize the sum of  this approximation and $f(x)$.
The approximation is based on linearizing $g(x)$ (hence the name ALM) and adding a ``prox'' term $\frac{1}{2\mu}\|x-y^k\|_2^2$.
When $\mu$ is small enough ($\mu\leq 1/L(g)$, where $L(g)$ is the Lipschitz constant for $\nabla g$)  this quadratic function,
$g(y^k) + \left\langle \nabla g(y^k), x-y^k \right\rangle +
\frac{1}{2\mu}\|x-y^k\|_2^2$
is an upper approximation to $g(x)$, which means that the reduction in the value of $F(x)$
achieved by minimizing $Q_g(x,y^k)$ in Step 1 is not smaller than the reduction achieved in the value
of  $Q_g(x,y^k)$ itself.  Similarly, in Step 2 we build an upper
approximation to $f(x)$ at $x^{k+1}$, $f(x^{k+1}) + \left\langle \nabla f(x^{k+1}), y-x^{k+1} \right\rangle + \frac{1}{2\mu}\|y-x^{k+1}\|_2^2,$
and minimize the sum $Q_f(x^{k+1},y)$ of it and $g(y)$.

Let us now assume that  $f(x)$ is in the class $C^{1,1}$ with Lipschitz constant
$L(f)$, while $g(x)$ is simply convex. Then from the first-order optimality conditions for the second minimization in \eqref{alg:sym-ADAL}, we have $-\lambda_y^{k+1} \in \partial g(y^{k+1})$, the subdifferential of $g(y)$ at $y=y^{k+1}$. Hence, replacing $\nabla g(y^k)$ in the definition of $Q_g(x, y^k)$ by $-\lambda_y^{k+1}$ in \eqref{alg:sym-ADAL}, we obtain the following modified version of \eqref{alg:sym-ADAL}.

\begin{algorithm}[tbh!]\caption{Alternating linearization method with skipping step}\label{alg:ALM-nonsmooth}
\begin{algorithmic}
\STATE {\bfseries Input:} $x^0=y^0$
\FOR{$k=0,1,\cdots$}
\STATE 1. Solve $ x^{k+1}:= \arg\min_{x} Q(x,y^k) \equiv f(x) + g(y^k) - \left\langle \lambda^k, x-y^k \right\rangle + \frac{1}{2\mu}\|x-y^k\|_2^2$;
\STATE 2. If $  F(x^{k+1})> Q(x^{k+1}, y^k)$ then  $x^{k+1}  := y^k$.
\STATE 3. Solve $ y^{k+1}:= \arg\min_{y} Q_f(x^{k+1},y) $;
\STATE 4. $\lambda^{k+1}= \nabla f(x^{k+1})-(x^{k+1}-y^{k+1})/\mu$.
\ENDFOR
\end{algorithmic}
\end{algorithm}

Algorithm \ref{alg:ALM-nonsmooth} is identical to the symmetric ADAL algorithm \eqref{alg:sym-ADAL} as long as
$F(x^{k+1})\leq Q(x^{k+1}, y^k)$ at each iteration (and to Algorithm \ref{alg:ALM-general} if $g(x)$ is in $C^{1,1}$ and $\mu\leq 1/\max\{L(f),L(g)\}$). If this condition fails, then the algorithm simply sets $x^{k+1}\leftarrow y^k$.
Algorithm \ref{alg:ALM-nonsmooth} has the following convergence property and iteration complexity bound.
For a proof see the Appendix.

\begin{theorem}\label{the:ALM-nonsmooth}
Assume $\nabla f$  is Lipschitz continuous with constant $L(f)$. For $\beta/L(f) \leq\mu \leq 1/L(f)$ where $0<\beta\leq 1$,
Algorithm \ref{alg:ALM-nonsmooth} satisfies
\begin{align}\label{the:ALM-nonsmooth-conclude} F(y^k)-F(x^*)\leq\frac{\|x^0-x^*\|^2}{2\mu (k+k_n)}, \forall k, \end{align} where $x^*$ is an optimal solution of \eqref{prob:min-sum-two} and $k_n$ is the number of
iterations until the $k-th$ for which $F(x^{k+1})\leq Q(x^{k+1}, y^k)$. Thus Algorithm \ref{alg:ALM-nonsmooth} produces a sequence which converges to the optimal solution in function value, and the number of iterations needed is $O(1/\epsilon)$ for an $\epsilon$-optimal solution.
\end{theorem}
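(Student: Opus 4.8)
The plan is to run the standard proximal-gradient (ISTA-style) analysis, but symmetrized across the two linearization steps and augmented to account for the skipping step, which is exactly what produces the extra $k_n$ in the denominator. The two facts I would establish first are the \emph{model descent} relations. For the $f$-step (Step 3), since $\nabla f$ is Lipschitz with constant $L(f)$ and $\mu\le 1/L(f)$, the descent lemma gives $f(y)\le f(x^{k+1})+\langle\nabla f(x^{k+1}),y-x^{k+1}\rangle+\tfrac{1}{2\mu}\|y-x^{k+1}\|^2$, hence $F(y^{k+1})\le Q_f(x^{k+1},y^{k+1})$ \emph{unconditionally}; this is precisely why the $f$-step needs no skipping. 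For the $g$-step (Step 1), $g$ is merely convex, so the analogous bound can fail, and the skip in Step 2 forces $F(x^{k+1})\le Q(x^{k+1},y^k)$ to hold in every iteration (trivially, with $x^{k+1}=y^k$, when it would otherwise fail).

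Next I would extract one inequality from each subproblem via strong convexity. Both $Q(\cdot,y^k)$ and $Q_f(x^{k+1},\cdot)$ are $\tfrac1\mu$-strongly convex (a convex term plus an affine term plus $\tfrac{1}{2\mu}\|\cdot\|^2$), so their minimizers satisfy $Q(x,y^k)\ge Q(x^{k+1},y^k)+\tfrac{1}{2\mu}\|x-x^{k+1}\|^2$ and likewise for $Q_f$. Evaluating at $x=x^*$ and using (i) the model-descent bounds above and (ii) that the linearized models underestimate $F$ up to the prox term, i.e.\ $Q_f(x^{k+1},x^*)\le F(x^*)+\tfrac{1}{2\mu}\|x^*-x^{k+1}\|^2$ (from convexity of $f$) and $Q(x^*,y^k)\le F(x^*)+\tfrac{1}{2\mu}\|x^*-y^k\|^2$ (from $-\lambda^k\in\partial g(y^k)$, which follows from the optimality condition for $y^k$ together with the $\lambda$-update), I obtain the two telescoping inequalities
\[
F(y^{k+1})-F(x^*)\le \tfrac{1}{2\mu}\big(\|x^*-x^{k+1}\|^2-\|x^*-y^{k+1}\|^2\big),
\]
holding in every iteration, and
\[
F(x^{k+1})-F(x^*)\le \tfrac{1}{2\mu}\big(\|x^*-y^{k}\|^2-\|x^*-x^{k+1}\|^2\big),
\]
holding in every non-skipped iteration; in a skipped iteration $x^{k+1}=y^k$ makes the first inequality already telescope in the $y$-iterates.

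I would then prove monotonicity of the function values along the whole trajectory. Combining the minimizer property of each subproblem with the two model-descent relations gives the chain $F(y^{k+1})\le Q_f(x^{k+1},y^{k+1})\le F(x^{k+1})\le Q(x^{k+1},y^k)\le F(y^k)$, so $\{F(y^k)\}$ is non-increasing and each $F(x^{k+1})$ is sandwiched between consecutive $y$-values. Summing the first (always valid) inequality over $j=0,\dots,k-1$ together with the second over the non-skipped iterations, the right-hand side telescopes to $\tfrac{1}{2\mu}\big(\|x^0-x^*\|^2-\|x^*-y^k\|^2\big)\le\tfrac{1}{2\mu}\|x^0-x^*\|^2$ (using $x^0=y^0$). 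On the left I keep the $k$ terms $F(y^{j+1})-F(x^*)$ and the $k_n$ terms $F(x^{j+1})-F(x^*)$ from the non-skipped iterations; bounding each of these $k+k_n$ nonnegative terms below by $F(y^k)-F(x^*)$ via monotonicity yields $(k+k_n)\big(F(y^k)-F(x^*)\big)\le\tfrac{1}{2\mu}\|x^0-x^*\|^2$, which is the claim. The lower bound $\mu\ge\beta/L(f)$ then makes this an explicit $O(1/(k+k_n))=O(1/k)$ rate, hence the $O(1/\epsilon)$ count.

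The main obstacle is the bookkeeping around the skip step: I must check that the always-valid inequality and the non-skip inequality fuse into a single telescoping chain in both cases, and, more importantly, that the \emph{retained} nonnegative terms $F(x^{j+1})-F(x^*)$ are exactly what upgrade the denominator from $k$ (the plain ISTA rate) to $k+k_n$. Securing that improvement hinges on the global monotonicity chain above, so the delicate points are establishing $F(y^{k+1})\le F(x^{k+1})\le F(y^k)$ uniformly, including on skipped iterations, and initializing $\lambda^0$ so that $-\lambda^0\in\partial g(y^0)$.
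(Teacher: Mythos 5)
Your proposal is correct and follows essentially the same route as the paper's proof: your two per-step inequalities are exactly the paper's key estimates (obtained there from a generalization of Beck--Teboulle's Lemma 2.3, which you rederive equivalently via strong convexity of the prox subproblems plus convexity of $f$ and $g$), and your skip bookkeeping, monotonicity chain $F(y^{k+1})\le F(x^{k+1})\le F(y^k)$, and telescoping sum producing the $k+k_n$ denominator match the paper's argument step for step. The one point the paper leaves implicit, which you rightly flag, is that $-\lambda^k\in\partial g(y^k)$ must be arranged at $k=0$ by initialization; for $k\ge 1$ it follows automatically from the optimality condition of Step 3 combined with the multiplier update in Step 4.
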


If $g(x)$ is also a smooth function in the class $C^{1,1}$ with Lipschitz constant $L(g)\leq 1/\mu$, then Theorem \ref{the:ALM-nonsmooth} also applies to Algorithm \ref{alg:ALM-general} since in this case $k_n=k$ (i.e., no ``skipping'' occurs).
Note that the iteration complexity bound in Theorem \ref{the:ALM-nonsmooth} can be improved. Nesterov \cite{Nesterov-1983,NesterovConvexBook2004} proved that  one can obtain an optimal iteration complexity bound
of  $O(1/\sqrt{\epsilon})$, using only first-order information.
 His acceleration technique is based on using a linear combination of previous
iterates to obtain a point where the approximation is built.
This technique has been exploited and extended by Tseng \cite{Tseng-2008}, Beck and Teboulle \cite{Beck-Teboulle-2009}, Goldfarb \etal \cite{Goldfarb-Ma-Scheinberg-2010} and many others. A similar technique can be adopted to derive a fast version of Algorithm \ref{alg:ALM-nonsmooth} that has an improved complexity bound of $O(1/\sqrt{\epsilon})$, while keeping the computational effort in each iteration almost unchanged.
However, we do not present this method here, since when applied to the SICS problem, it did not work as well as Algorithm \ref{alg:ALM-nonsmooth}.

\section{ALM for SICS}
The SICS problem \bea\label{prob:min-f-g-SICS} \min_{X\in S^n_{++}} \quad F(X) \equiv f(X) + g(X), \eea where $f(X)=-\log\det(X)+\langle\hat\Sigma,X\rangle$ and $g(X)=\rho\|X\|_1$, is of the same form as \eqref{prob:min-sum-two}. However, in this case
neither $f(X)$ nor $g(X)$ have Lipschitz continuous gradients. Moreover,
$f(X)$ is only defined for positive definite matrices while $g(X)$  is defined
everywhere. These properties of the  objective function  make the
SICS problem especially challenging for optimization methods. Nevertheless,
 we can still
 apply \eqref{alg:sym-ADAL} to solve the problem directly.
Moreover, we can apply Algorithm \ref{alg:ALM-nonsmooth} and obtain the complexity bound in  Theorem \ref{the:ALM-nonsmooth}  as follows.

The $\log\det(X)$ term in $f(X)$ implicitly requires that $X\in S^n_{++}$ and
the gradient of $f(X)$, which is given by $-X^{-1}+\hat\Sigma$, is not Lipschitz continuous in $S^n_{++}$. Fortunately, as proved in Proposition 3.1 in \cite{Lu-covsel-siopt-2009}, the optimal solution of \eqref{prob:min-f-g-SICS} $X^*\succeq \alpha I$, where $\alpha = \frac{1}{\|\hat\Sigma\|+n\rho}.$
%\bea\label{def:alpha-L(f)}\alpha = \frac{1}{\|\Sigma\|+n\rho},\eea
%(see Proposition 3.1 in \cite{Lu-covsel-siopt-2009}).
 Therefore, if we define $\mathcal{C}:=\{X\in S^n: X\succeq\frac{\alpha}{2} I\}$, the SICS problem  \eqref{prob:min-f-g-SICS} can be formulated as:
\bea\label{prob:min-f-g-constraint}\min_{X,Y} \{ f(X) + g(Y) : X - Y = 0, X \in \mathcal{C}, Y \in \mathcal{C}\}.\eea %where $\mathcal{C}:=\{X\in S^n: X\succeq\frac{\alpha}{2} I\}$.

 We can include constraints  $X \in \mathcal{C}$ in Step 1 and  $Y \in \mathcal{C}$
in Step 3 of  Algorithm \ref{alg:ALM-nonsmooth}.
 Theorem \ref{the:ALM-nonsmooth} can  then be applied as discussed in \citep{Goldfarb-Ma-Scheinberg-2010}. However, a difficulty now
arises when performing the minimization in $Y$. {Without the constraint  $Y \in \mathcal{C}$,
only a matrix shrinkage operation is needed, but with this additional constraint the problem becomes harder to solve.}
Minimization in $X$  with or without
the constraint  $X \in \mathcal{C}$ is accomplished by performing an SVD. Hence the constraint can be easily imposed.

Instead of imposing constraint  $Y \in \mathcal{C}$  we can obtain feasible solutions by a line search on $\mu$.
 We know that the constraint $ X\succeq\frac{\alpha}{2} I$ is not tight at the solution. Hence if we start
the algorithm with $X\succeq\alpha I $ and restrict the step size $\mu$ to be sufficiently small
then the iterates of the method will remain  in $\mathcal{C}$.

Note however, that the bound on the Lipschitz constant of the gradient of $f(X)$ is $1/\alpha^2$ and hence can be very large. It is not practical to restrict $\mu$ in the algorithm to be smaller than $\alpha^2$, since $\mu$ determines the step size at each iteration. Hence, for a practical approach we can only claim that the theoretical convergence rate bound holds in only a small neighborhood of the optimal solution. We now present a practical version of our algorithm applied to the SICS problem.
\begin{algorithm}[tbh!]\caption{Alternating linearization method (ALM) for SICS}\label{alg:ALM-SICS-constraint}
\begin{algorithmic}
\STATE {\bfseries Input:} $X^0=Y^0$, $\mu_0$.
\FOR{$k=0,1,\cdots$}
\STATE 0. Pick $\mu_{k+1}\leq \mu_{k}$.
\STATE 1. Solve $ X^{k+1}:= \arg\min_{X\in \mathcal{C}} f(X) + g (Y^k) - \langle\Lambda^k, X-Y^k\rangle + \frac{1}{2\mu_{k+1}}\|X-Y^k\|_F^2$;
\STATE 2.  If $  g(X^{k+1})> g (Y^k) - \langle\Lambda^k,X^{k+1}-Y^k\rangle + \frac{1}{2\mu_{k+1}}\|X^{k+1}-Y^k\|_F^2$, then  $X^{k+1}  := Y^k$.
\STATE 3. Solve $ Y^{k+1}:= \arg\min_{Y} f(X^{k+1}) + \langle\nabla f(X^{k+1}),Y-X^{k+1}\rangle + \frac{1}{2\mu_{k+1}}\|Y-X^{k+1}\|_F^2 + g(Y)$;
\STATE 4. $\Lambda^{k+1}=\nabla f(X^{k+1})- (X^{k+1}-Y^{k+1})/\mu_{k+1}$.
\ENDFOR
\end{algorithmic}
\end{algorithm}

We now show how to solve the two optimization problems in Algorithm \ref{alg:ALM-SICS-constraint}. The first-order optimality conditions for Step 1 in Algorithm
\ref{alg:ALM-SICS-constraint}, ignoring the constraint $X\in\mathcal{C}$ are:
\bea\label{optcond-ALM-X}\nabla f(X)-\Lambda^k+(X-Y^k)/\mu_{k+1}=0.\eea
Consider $V\Diag(d)V^\top$ - the spectral decomposition of
$Y^k+\mu_{k+1}(\Lambda^k-\hat{\Sigma})$ and let
 \bea\label{X-sub-gamma}\gamma_i=\left(d_i+\sqrt{d_i^2+4\mu_{k+1}}\right)/2, i=1,\ldots,n.\eea
 Since $\nabla f(X)=-X^{-1}+\hat\Sigma$, it is easy to verify that $X^{k+1}:=V\Diag(\gamma)V^\top$ satisfies
\eqref{optcond-ALM-X}.  When the constraint $X\in\mathcal{C}$ is imposed, the
optimal solution  changes  to  $X^{k+1}:=V\Diag(\gamma)V^\top$  with $\gamma_i=\max\left\{\alpha/2,\left(d_i+\sqrt{d_i^2+4\mu_{k+1}}\right)/2\right\}, i=1,\ldots,n.$
We observe that solving \eqref{optcond-ALM-X} requires approximately the same effort
($O(n^3)$) as is required to  compute $\nabla f(X^{k+1})$. Moreover, from the solution to
\eqref{optcond-ALM-X}, $\nabla f(X^{k+1})$ is obtained with only a negligible amount
of additional effort, since
$(X^{k+1})^{-1}:=V\Diag(\gamma)^{-1}V^\top$.

The first-order optimality conditions for Step 2 in Algorithm \ref{alg:ALM-SICS-constraint}
are:
\bea\label{optcond-ALM-Y} 0 \in \nabla f(X^{k+1})+(Y-X^{k+1})/\mu_{k+1}+\partial g(Y).\eea Since
$g(Y)=\rho \|Y\|_1$, it is well known that the solution to \eqref{optcond-ALM-Y} is given by
 \beaa
Y^{k+1}={\rm shrink}( X^{k+1}-\mu_{k+1}(\hat\Sigma -(X^{k+1})^{-1}), \mu_{k+1}\rho ),
\eeaa
where the ``shrinkage operator'' ${\rm shrink}(Z, \rho)$ updates each element $Z_{ij}$ of the matrix $Z$ by the formula ${\rm shrink}(Z, \rho)_{ij}=\sgn(Z_{ij})\cdot\max\{|Z_{ij}|-\rho,0\}.$

%$
%{\rm shrink}(Z, \rho)_{ij} = \left\{ \begin{array}{ll} Z_{ij}-\rho,& \ {\rm if} Z_{ij}>\rho\\ 0, & \ {\rm if} -\rho
%\leq Z_{ij}\leq \rho\\ Z_{ij}+\rho, & \ {\rm if} Z_{ij}<-\rho\end{array} \right.
%$

The $O(n^3)$ complexity of Step 1, which requires a spectral decomposition, dominates the $O(n^2)$ complexity of Step 2 which requires a simple shrinkage.
There is no closed-form solution for the subproblem
corresponding to $Y$ when the constraint $Y\in\mathcal{C}$ is imposed. Hence, we neither impose
this constraint explicitly nor do so by a line search on $\mu_k$, since in practice this degrades the performance of the
algorithm substantially. Thus, the resulting iterates $Y^k$ may not be positive definite,
while the iterates $X^k$ remain so. Eventually due to the convergence of  $Y^k$ and $X^k$, the $Y^k$ iterates become positive definite and the constraint $Y\in\mathcal{C}$ is satisfied.

%Relaxing the positive definiteness
%constraint during the course of the algorithm  appears to be desirable for the  overall
%performance.

Let us now remark on the learning based intuition behind Algorithm \ref{alg:ALM-SICS-constraint}.
We recall that $-\Lambda^k \in \partial g(Y^k)$. The two steps of the algorithm can be written as
\bea\label{eq:stepX}
 X^{k+1}:= \arg\min_{X\in \mathcal{C}} \{f(X)+\frac{1}{2\mu_{k+1}}\|X-(Y^k+\mu_{k+1} \Lambda^k)\|_F^2\}
\eea
and
\bea\label{eq:stepY}
Y^{k+1}:= \arg\min_{Y} \{ g(Y)+\frac{1}{2\mu_{k+1}}\|Y-(X^{k+1}-\mu_{k+1} (\hat\Sigma -(X^{k+1})^{-1}))\|_F^2\}.
\eea
The SICS problem is trying to optimize two conflicting objectives:
on the one hand it tries to find a  covariance matrix $X^{-1}$ that best fits the observed data,
 i.e., is as close to $\hat\Sigma$ as possible, and on the other hand it tries to obtain a sparse matrix $X$. The proposed algorithm address these two objectives in an alternating manner.
Given an initial ``guess'' of the sparse matrix $Y^k$ we update this guess by a
subgradient descent step of length $\mu_{k+1}$: $Y^k+\mu_{k+1}\Lambda^k$. Recall that $-\Lambda^k \in \partial g(Y^k)$.
Then problem (\ref{eq:stepX})  seeks a solution $X$ that optimizes the first objective (best fit of the data)  while adding a regularization term which imposes a Gaussian prior on $X$ whose mean is
the current {\em guess} for the sparse matrix: $Y^k+\mu_{k+1} \Lambda^k$.
The solution to  (\ref{eq:stepX}) gives us  a {\em guess} for the inverse covariance $X^{k+1}$.
We again update it by taking a gradient descent step: $X^{k+1}-\mu_{k+1} (\hat\Sigma -(X^{k+1})^{-1})$. Then problem
\eqref{eq:stepY}  seeks a sparse solution $Y$ while also imposing a Gaussian prior on  $Y$
whose mean is the {\em guess} for the inverse covariance matrix
$X^{k+1}-\mu_{k+1} (\hat\Sigma -(X^{k+1})^{-1})$. Hence the sequence of $X^k$'s is a
sequence of
positive definite  inverse covariance matrices that converge to a sparse matrix,
while the sequence of $Y^k$'s  is a sequence of sparse matrices that converges to a positive definite
inverse covariance matrix.

An important question is how to pick $\mu_{k+1}$. Theory tells us that if we pick a small enough value, then we can obtain the complexity bounds. However, in
practice this value is too small. We discuss the simple strategy that we use in the next section.

\section{Numerical Experiments} \label{sec:numerical}
In this section, we present numerical results on both synthetic and real data to demonstrate the efficiency of our SICS ALM algorithm. Our codes for ALM were written in MATLAB. All numerical experiments were run in MATLAB 7.3.0 on a Dell Precision 670 workstation with an Intel Xeon(TM) 3.4GHZ CPU and 6GB of RAM.

Since $-\Lambda^k\in\partial g(Y^k)$, $\|\Lambda^k\|_\infty\leq \rho$; hence $\hat\Sigma-\Lambda^k$ is a feasible solution to the dual problem \eqref{prob:sics-L1-dual} as long as it is positive
definite. Thus the duality gap at the $k$-th iteration is given by:
\bea\label{dual-gap} Dgap := -\log\det(X^k)+\langle\hat\Sigma,X^k\rangle+\rho\|X^k\|_1-\log\det(\hat\Sigma-\Lambda^k)-n. \eea We define the relative duality gap as:
 $Rel.gap := Dgap/(1+|pobj|+|dobj|),$ where $pobj$ and $dobj$ are respectively the objective function values of the primal problem \eqref{prob:min-f-g-SICS} at point $X^k$, and the dual problem \eqref{prob:sics-L1-dual} at $\hat\Sigma-\Lambda^k$. Defining $d_k(\phi(x))\equiv\max\{1,\phi(x^k),\phi(x^{k-1})\}$, we measure the relative changes of objective function value $F(X)$ and the iterates $X$ and $Y$ as follows: \beaa Frel:= \frac{|F(X^{k})-F(X^{k-1})|}{d_k(|F(X)|)},\ Xrel := \frac{\|X^k-X^{k-1}\|_F}{d_k(\|X\|_F)}, \ Yrel := \frac{\|Y^k-Y^{k-1}\|_F}{d(\|Y\|_F)}.\eeaa

We terminate ALM when either \bea\label{stop-crit-dgap} (i) \quad Dgap \leq \epsilon_{gap} \quad \mbox{ or } \quad (ii) \quad \max\{Frel,Xrel,Yrel\} \leq \epsilon_{rel}. \eea
Note that in \eqref{dual-gap}, computing $\log\det(X^k)$ is easy since the spectral decomposition of $X^k$ is already available (see \eqref{optcond-ALM-X} and \eqref{X-sub-gamma}), but computing $\log\det(\hat\Sigma-\Lambda^k)$ requires another expensive spectral decomposition. Thus, in practice, we only check \eqref{stop-crit-dgap}(i) every $N_{gap}$ iterations. We check \eqref{stop-crit-dgap}(ii) at every iteration since this is inexpensive.

A continuation strategy for updating $\mu$ is also crucial to ALM. In our experiments, we adopted the following update rule. After every $N_{\mu}$ iterations, we set $\mu:=\max\{\mu\cdot\eta_\mu, \bar{\mu}\}$; i.e., we simply reduce $\mu$ by a constant factor $\eta_\mu$ every $N_\mu$ iterations until a desired lower bound on $\mu$ is achieved.

We compare ALM (i.e., Algorithm \ref{alg:ALM-SICS-constraint} with the above stopping criteria and $\mu$ updates), with the projected subgradient method (PSM) proposed by Duchi \etal in \cite{Duchi-UAI-2008} and implemented by Mark Schmidt \footnote{The MATLAB can be downloaded from http://www.cs.ubc.ca/$\sim$schmidtm/Software/PQN.html} and the smoothing method (VSM) \footnote{The MATLAB code can be downloaded from http://www.math.sfu.ca/$\sim$zhaosong} proposed by Lu in \cite{Lu-covsel-siopt-2009}, which are considered to be  the state-of-the-art algorithms for solving SICS problems. The per-iteration complexity of all three algorithms is roughly the same; hence a comparison of the number of iterations is meaningful. The parameters used in PSM and VSM are set at their default values.
We used the following parameter values in ALM: $\epsilon_{gap}=10^{-3},\epsilon_{rel}=10^{-8},N_{gap}=20,N_{\mu}=20,
\bar{\mu}=\max\{\mu_0\eta_{\mu}^8,10^{-6}\},\eta_\mu=1/3,$ where $\mu_0$ is the initial $\mu$ which is set according to $\rho$; specifically, in our experiments, $\mu_0=100/\rho,$ if $\rho<0.5$,
$\mu_0=\rho$ if $0.5\leq \rho\leq 10$, and $\mu_0=\rho/100$ if $\rho>10$.

\subsection{Experiments on synthetic data}
We randomly created test problems using a procedure proposed by Scheinberg and Rish in \cite{Scheinberg-Rish-2009}. Similar procedures were used by Wang \etal in \cite{Wang-Sun-Toh-2009} and Li and Toh in \cite{Li-Toh-2010}.
For a given dimension $n$, we first created a sparse matrix $U\in \br^{n\times n}$ with nonzero entries equal to -1 or 1 with equal probability. Then we computed $S:=(U*U^\top)^{-1}$ as the true covariance matrix. Hence, $S^{-1}$ was sparse. We then drew $p=5n$ iid vectors, $Y_1,\ldots,Y_p$, from the Gaussian distribution $\mathcal{N}(\mathbf{0}, S)$ by using the $mvnrnd$ function in MATLAB, and computed a sample covariance matrix $\hat\Sigma:=\frac{1}{p}\sum_{i=1}^pY_iY_i^\top.$ We compared ALM with PSM \cite{Duchi-UAI-2008} and VSM \cite{Lu-covsel-siopt-2009} on these randomly created data with different $\rho$. The PSM code was terminated using its default stopping criteria, which included \eqref{stop-crit-dgap}(i) with $\epsilon_{gap}=10^{-3}$. VSM was also terminated when $Dgap\leq 10^{-3}$. Since PSM and VSM solve the dual problem \eqref{prob:sics-L1-dual}, the duality gap which is given by \eqref{def:gap-Winv} is available without any additional spectral decompositions. The results are shown in Table \ref{tab:ALM-Duchi-K}. All CPU times reported are in seconds.

\begin{table}[ht]
\begin{center}\caption{Comparison of ALM, PSM and VSM on synthetic data}\label{tab:ALM-Duchi-K}{\scriptsize%\tiny
\begin{tabular}{|c||c|c|c|c||c|c|c|c||c|c|c|c|}\hline
  & \multicolumn{4}{|c||}{ALM} & \multicolumn{4}{|c||}{PSM} & \multicolumn{4}{|c|}{VSM}\\\hline
n & iter & Dgap & Rel.gap & CPU & iter & Dgap & Rel.gap & CPU & iter & Dgap & Rel.gap & CPU \\\hline
\multicolumn{13}{|c|}{$\rho=0.1$} \\\hline
200      & 300  & 8.70e-4 & 1.51e-6 & 13   & 1682    & 9.99e-4 & 1.74e-6 & 38    &  857 & 9.97e-4 & 1.73e-6 & 37   \\\hline
500      & 220  & 5.55e-4 & 4.10e-7 & 84   & 861     & 9.98e-4 & 7.38e-7 & 205   &  946 & 9.98e-4 & 7.38e-7 & 377 \\\hline
1000     & 180  & 9.92e-4 & 3.91e-7 & 433  & 292     & 9.91e-4 & 3.91e-7 & 446   &  741 & 9.97e-4 & 3.94e-7 & 1928 \\\hline
1500     & 199  & 1.73e-3 & 4.86e-7 & 1405 & 419     & 9.76e-4 & 2.74e-7 & 1975  &  802 & 9.98e-4 & 2.80e-7 & 6340 \\\hline
2000     & 200  & 6.13e-5 & 1.35e-8 & 3110 & 349     & 1.12e-3 & 2.46e-7 & 3759  &  915 & 1.00e-3 & 2.20e-7 & 16085 \\\hline
\multicolumn{13}{|c|}{$\rho=0.5$} \\\hline
200     & 140     &  9.80e-4  & 1.15e-6  &  6      & 6106    & 1.00e-3  & 1.18e-6   & 137  & 1000 & 9.99e-4 & 1.18e-6 & 43 \\\hline
500     & 100     &  1.69e-4  & 7.59e-8  &  39     & 903     & 9.90e-4  & 4.46e-7   & 212  & 1067 & 9.99e-4 & 4.50e-7 & 425 \\\hline
1000    & 100     &  9.28e-4  & 2.12e-7  &  247    & 489     & 9.80e-4  & 2.24e-7   & 749  & 1039 & 9.95e-4 & 2.27e-7 & 2709 \\\hline
1500    & 140     &  2.17e-4  & 3.39e-8  &  1014   & 746     & 9.96e-4  & 1.55e-7   & 3514 & 1191 & 9.96e-4 & 1.55e-7 & 9405  \\\hline
2000    & 160     &  4.70e-4  & 5.60e-8  &  2529   & 613     & 9.96e-4  & 1.18e-7   & 6519 & 1640 & 9.99e-4 & 1.19e-7 & 28779  \\\hline
\multicolumn{13}{|c|}{$\rho=1.0$} \\\hline
200      & 180    & 4.63e-4   & 4.63e-7  & 8     & 7536    & 1.00e-3 & 1.00e-6   & 171     & 1296 & 9.96e-4 & 9.96e-7 & 57      \\\hline
500      & 140    & 4.14e-4   & 1.56e-7  & 55    & 2099    & 9.96e-4 & 3.76e-7   & 495    & 1015 & 9.97e-4 & 3.76e-7 & 406     \\\hline
1000     & 160    & 3.19e-4   & 6.07e-8  & 394   & 774     & 9.83e-4 & 1.87e-7   & 1172   & 1310 & 9.97e-4 & 1.90e-7 & 3426    \\\hline
1500     & 180    & 8.28e-4   & 1.07e-7  & 1304  & 1088    & 9.88e-4 & 1.27e-7   & 5100   & 1484 & 9.96e-4 & 1.28e-7 & 11749  \\\hline
2000     & 240    & 9.58e-4   & 9.37e-8  & 3794  & 1158    & 9.35e-4 & 9.15e-8   & 12310  & 2132 & 9.99e-4 & 9.77e-8 & 37406   \\\hline
\end{tabular}}
\end{center}
\end{table}

From Table \ref{tab:ALM-Duchi-K} we see that on these randomly created SICS problems, ALM outperforms PSM and VSM in both accuracy and CPU time with the performance gap increasing as $\rho$ increases. For example, for $\rho=1.0$ and $n=2000$, ALM achieves $Dgap=9.58e-4$ in about 1 hour and 15 minutes, while PSM and VSM need about 3 hours and 25 minutes and 10 hours and 23 minutes, respectively, to achieve similar accuracy.

\subsection{Experiments on real data}\label{sec:real-data}
We tested ALM on real data from gene expression networks using the five data sets from \cite{Li-Toh-2010} provided to us by Kim-Chuan Toh:
{\it (1) Lymph node status; (2) Estrogen receptor; (3) Arabidopsis thaliana; (4) Leukemia; (5) Hereditary breast cancer.} See \cite{Li-Toh-2010} and references therein for the descriptions of these data sets.
Table \ref{tab:real-data-Toh} presents our test results. As suggested in \cite{Li-Toh-2010}, we set $\rho =0.5$. From Table \ref{tab:real-data-Toh} we see that ALM is much faster and provided more accurate solutions than PSM and VSM.
\begin{table}[ht!]
\begin{center}\caption{Comparison of ALM, PSM and VSM on real data}\label{tab:real-data-Toh}{\scriptsize
\begin{tabular}{|c|c||c|c|c|c||c|c|c|c|c|c|c|c|}\hline
&  & \multicolumn{4}{|c||}{ALM} & \multicolumn{4}{|c|}{PSM}& \multicolumn{4}{|c|}{VSM}\\\hline
prob.  & n     & iter & Dgap & Rel.gap & CPU & iter & Dgap & Rel.gap & CPU & iter & Dgap & Rel.gap & CPU \\\hline
(1)    & 587   & 60   & 9.41e-6  & 5.78e-9 & 35   & 178   & 9.22e-4 & 5.67e-7 & 64     & 467  & 9.78e-4 & 6.01e-7 & 273  \\\hline
(2)    & 692   & 80   & 6.13e-5  & 3.32e-8 & 73   & 969   & 9.94e-4 & 5.38e-7 & 531    & 953  & 9.52e-4 & 5.16e-7 & 884  \\\hline
(3)    & 834   & 100  & 7.26e-5  & 3.27e-8 & 150  & 723   & 1.00e-3 & 4.50e-7 & 662    & 1097 & 7.31e-4 & 3.30e-7 & 1668 \\\hline
(4)    & 1255  & 120  & 6.69e-4  & 1.97e-7 & 549  & 1405  & 9.89e-4 & 2.91e-7 & 4041   & 1740 & 9.36e-4 & 2.76e-7 & 8568 \\\hline
(5)    & 1869  & 160  & 5.59e-4  & 1.18e-7 & 2158 & 1639  & 9.96e-4 & 2.10e-7 & 14505  & 3587 & 9.93e-4 & 2.09e-7 & 52978 \\\hline
\end{tabular}}
\end{center}
\end{table}

\subsection{Solution Sparsity}
In this section, we compare the sparsity patterns of the solutions produced by ALM, PSM and VSM. For ALM, the sparsity of the solution is given by the sparsity of $Y$. Since PSM and VSM solve the dual problem, the primal solution $X$, obtained by inverting the dual solution $W$, is never sparse due to floating point errors. Thus it is not fair to measure the sparsity of $X$ or a truncated version of $X$. Instead, we measure the sparsity of solutions produced by PSM and VSM by appealing to complementary slackness. Specifically, the $(i,j)$-th element of the inverse covariance matrix is deemed to be nonzero if and only if $|W_{ij}-\hat\Sigma_{ij}|=\rho$. We give results for a random problem ($n=500$) and the first real data set in Table \ref{tab:sparsity-comparison}. For each value of $\rho$, the first three rows show the number of nonzeros in the solution and the last three rows show the number of entries that are nonzero in the solution produced by one of the methods but are zero in the solution produced by the other method. The sparsity of the ground truth inverse covariance matrix of the synthetic data is 6.76\%.
\begin{table}[ht!]
\begin{center}\caption{Comparison of sparsity of solutions produced by ALM, PSM and VSM}\label{tab:sparsity-comparison}{\scriptsize
\begin{tabular}{|c|c|c|c|c|c|c|c|c|c|}\hline
$\rho$ & 100 & 50 & 10 & 5 & 1 & 0.5 & 0.1 & 0.05 & 0.01 \\\hline
\multicolumn{10}{|c|}{synthetic problem data} \\\hline
ALM & 700 & 2810 & 11844 & 15324 & 28758 & 37510 & 63000 & 75566 & 106882 \\\hline
PSM & 700 & 2810 & 11844 & 15324 & 28758 & 37510 & 63000 & 75566 & 106870 \\\hline
VSM & 700 & 2810 & 11844 & 15324 & 28758 & 37510 & 63000 & 75568 & 106876 \\\hline
ALM vs PSM & 0 & 0 & 0 & 0 & 0 & 0 & 0 & 2 & 14 \\\hline
PSM vs VSM & 0 & 0 & 0 & 0 & 0 & 0 & 0 & 0 & 8 \\\hline
VSM vs ALM & 0 & 0 & 0 & 0 & 0 & 0 & 0 & 2 & 2 \\\hline
\multicolumn{10}{|c|}{real problem data} \\\hline
ALM & 587 & 587 & 587 & 587 & 587 & 4617 & 37613 & 65959 & 142053 \\\hline
PSM & 587 & 587 & 587 & 587 & 587 & 4617 & 37615 & 65957 & 142051 \\\hline
VSM & 587 & 587 & 587 & 587 & 587 & 4617 & 37613 & 65959 & 142051 \\\hline
ALM vs PSM & 0 & 0 & 0 & 0  & 0   & 0    & 0     & 2     & 2      \\\hline
PSM vs VSM & 0 & 0 & 0 & 0  & 0   & 0    & 2     & 0     & 0      \\\hline
VSM vs ALM & 0 & 0 & 0 & 0  & 0   & 0    & 0     & 0     & 0      \\\hline
\end{tabular}}
\end{center}
\end{table}
From Table \ref{tab:sparsity-comparison} we can see that when $\rho$ is relatively large ($\rho\geq 0.5$), all three algorithms produce solutions with exactly
the same sparsity patterns. Only when $\rho$ is very small, are there slight differences.
We note that the ROC curves depicting the trade-off between the number of true positive elements recovered versus the number of false positive elements as a function of the regularization parameter $\rho$ are also almost identical for the three methods.

\section*{Acknowledgements} We would like to thank Professor Kim-Chuan Toh for providing the data set used in Section \ref{sec:real-data}. The research reported here was supported in
part by NSF Grants DMS 06-06712 and DMS 10-16571, ONR Grant
N00014-08-1-1118 and DOE Grant DE-FG02-08ER25856.

\newpage
{\small
\bibliographystyle{unsrt}
\bibliography{C:/Mywork/Optimization/work/reports/bibfiles/All}}

\begin{thebibliography}{10}

\bibitem{Lauritzen-1996}
S.~Lauritzen.
\newblock {\em Graphical Models}.
\newblock Oxford University Press, 1996.

\bibitem{Natarajan-95}
B.~K. Natarajan.
\newblock Sparse approximate solutions to linear systems.
\newblock {\em SIAM Journal on Computing}, 24:227--234, 1995.

\bibitem{Hiriart-Urruty-Lemarechal-1993}
J.-B. Hiriart-Urruty and C.~Lemar\'echal.
\newblock {\em Convex Analysis and Minimization Algorithms II: Advanced Theory
  and Bundle Methods}.
\newblock Springer-Verlag, New York, 1993.

\bibitem{Yuan-Lin-2007}
M.~Yuan and Y.~Lin.
\newblock Model selection and estimation in the {G}aussian graphical model.
\newblock {\em Biometrika}, 94(1):19--35, 2007.

\bibitem{Friedman-Hastie-Tibshirani-2007}
J.~Friedman, T.~Hastie, and R.~Tibshirani.
\newblock Sparse inverse covariance estimation with the graphical lasso.
\newblock {\em Biostatistics}, 2007.

\bibitem{Wainwright-Ravikumar-Lafferty-2007}
M.~Wainwright, P.~Ravikumar, and J.~Lafferty.
\newblock High-dimensional graphical model selection using $\ell_1$-regularized
  logistic regression.
\newblock {\em NIPS}, 19:1465--1472, 2007.

\bibitem{Banerjee-ElGhaoui-Aspremont-2008}
O.~Banerjee, L.~El~Ghaoui, and A.~d'Aspremont.
\newblock Model selection through sparse maximum likelihood estimation for
  multivariate gaussian for binary data.
\newblock {\em Journal of Machine Learning Research}, 9:485--516, 2008.

\bibitem{Li-Toh-2010}
L.~Li and K.-C. Toh.
\newblock An inexact interior point method for $l_1$-regularized sparse
  covariance selection.
\newblock {\em preprint}, 2010.

\bibitem{Tibshirani-LASSO-1996}
R.~Tibshirani.
\newblock Regression shrinkage and selection via the lasso.
\newblock {\em J. Royal. Statist. Soc B.}, 58(1):267--288, 1996.

\bibitem{Sun-KDD09-SICS}
L.~Sun, R.~Patel, J.~Liu, K.~Chen, T.~Wu, J.~Li, E.~Reiman, and J.~Ye.
\newblock Mining brain region connectivity for alzheimer's disease study via
  sparse inverse covariance estimation.
\newblock {\em KDD'09}, 2009.

\bibitem{Nemirovski-Prox-siopt-2005}
A.~Nemirovski.
\newblock Prox-method with rate of convergence {$O(1/t)$} for variational
  inequalities with {L}ipschitz continuous monotone operators and smooth
  convex-concave saddle point problems.
\newblock {\em SIAM Journal on Optimization}, 15(1):229--251, 2005.

\bibitem{Scheinberg-Rish-2009}
K.~Scheinberg and I.~Rish.
\newblock Sinco - a greedy coordinate ascent method for sparse inverse
  covariance selection problem.
\newblock 2009.
\newblock Preprint available at
  http://www.optimization-online.org/DB\_HTML/2009/07/2359.html.

\bibitem{Duchi-UAI-2008}
J.~Duchi, S.~Gould, and D.~Koller.
\newblock Projected subgradient methods for learning sparse {G}aussian.
\newblock {\em Conference on Uncertainty in Artificial Intelligence (UAI
  2008)}, 2008.

\bibitem{Nesterov-2005}
Y.~E. Nesterov.
\newblock Smooth minimization for non-smooth functions.
\newblock {\em Math. Program. Ser. A}, 103:127--152, 2005.

\bibitem{NesterovConvexBook2004}
Y.~E. Nesterov.
\newblock Introductory lectures on convex optimization.
\newblock 87:xviii+236, 2004.
\newblock A basic course.

\bibitem{Aspremont-Banerjee-ElGhaoui-2008}
A.~D'Aspremont, O.~Banerjee, and L.~El~Ghaoui.
\newblock First-order methods for sparse covariance selection.
\newblock {\em SIAM Journal on Matrix Analysis and its Applications},
  30(1):56--66, 2008.

\bibitem{Lu-covsel-siopt-2009}
Z.~Lu.
\newblock Smooth optimization approach for sparse covariance selection.
\newblock {\em SIAM J. Optim.}, 19(4):1807--1827, 2009.

\bibitem{Yuan-2009}
X.~Yuan.
\newblock Alternating direction methods for sparse covariance selection.
\newblock 2009.
\newblock Preprint available at
  http://www.optimization-online.org/DB\_HTML/2009/09/2390.html.

\bibitem{Wang-Sun-Toh-2009}
C.~Wang, D.~Sun, and K.-C. Toh.
\newblock Solving log-determinant optimization problems by a {N}ewton-{CG}
  primal proximal point algorithm.
\newblock {\em preprint}, 2009.

\bibitem{Fortin-Glowinski-1983}
M.~Fortin and R.~Glowinski.
\newblock {\em Augmented Lagrangian methods: applications to the numerical
  solution of boundary-value problems}.
\newblock North-Holland Pub. Co., 1983.

\bibitem{Glowinski-LeTallec-89}
R.~Glowinski and P.~Le~Tallec.
\newblock {\em Augmented Lagrangian and Operator-Splitting Methods in Nonlinear
  Mechanics}.
\newblock SIAM, Philadelphia, Pennsylvania, 1989.

\bibitem{Nesterov-1983}
Y.~E. Nesterov.
\newblock A method for unconstrained convex minimization problem with the rate
  of convergence $\mathcal{O}(1/k^2)$.
\newblock {\em Dokl. Akad. Nauk SSSR}, 269:543--547, 1983.

\bibitem{Tseng-2008}
P.~Tseng.
\newblock On accelerated proximal gradient methods for convex-concave
  optimization.
\newblock {\em submitted to SIAM J. Optim.}, 2008.

\bibitem{Beck-Teboulle-2009}
A.~Beck and M.~Teboulle.
\newblock A fast iterative shrinkage-thresholding algorithm for linear inverse
  problems.
\newblock {\em SIAM J. Imaging Sciences}, 2(1):183--202, 2009.

\bibitem{Goldfarb-Ma-Scheinberg-2010}
D.~Goldfarb, S.~Ma, and K.~Scheinberg.
\newblock Fast alternating linearization methods for minimizing the sum of two
  convex functions.
\newblock Technical report, Department of IEOR, Columbia University, 2010.

\end{thebibliography}
%\bibliography{../All}}

\newpage
\section{Appendix}

We show in the following that the iteration complexity
of Algorithm \ref{alg:ALM-nonsmooth} is $O(1/\epsilon)$ to get an $\epsilon$-optimal solution.
First, we need the following definitions and a lemma which is a generalization of Lemma 2.3 in \cite{Beck-Teboulle-2009}.
Let $\Psi: \br^n\rightarrow \br$ and $\Phi:\br^n\rightarrow \br$ be convex functions and define
%\beaa Q_\phi(u,v):= \phi(u) + \langle \gamma_\phi(u),v-u \rangle + \frac{1}{2\mu}\|v-u\|_2^2 + \psi(v) \eeaa and $p_\phi(u):=\arg\min_v Q_\phi(u,v).$
%We also define
\beaa Q_\psi(u,v):= \phi(u) + \psi(v) + \langle \gamma_\psi(v), u-v \rangle + \frac{1}{2\mu}\|u-v\|_2^2, \eeaa
where $\gamma_\psi(v)$ is any subgradient in the subdifferential $\partial\psi(v)$ of $\psi(v)$ at the point $v$, and \bea\label{def:p_g(v)} p_\psi(v):=\arg\min_u Q_\psi(u,v).\eea

\begin{lemma}\label{lem:BT2.3}
Let $\Phi(\cdot) = \phi(\cdot)+\psi(\cdot)$. For any  $v$, if
\bea\label{lem:BT2.3-assump-g} \Phi(p_\psi(v)) \leq Q_\psi(p_\psi(v),v), \eea
then for any $u$,
\bea\label{lem:BT2.3-conclude-g} 2\mu(\Phi(u) - \Phi(p_\psi(v)) ) \geq \|p_\psi(v)-u\|^2 - \|v-u\|^2. \eea
\end{lemma}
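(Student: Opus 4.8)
The plan is to follow the template of the proximal-gradient ``descent lemma'' (Lemma 2.3 of Beck and Teboulle), adapted so that $\psi$ is the linearized function and $\phi$ is kept intact. Throughout I abbreviate $p := p_\psi(v)$ and $\gamma := \gamma_\psi(v)$. The first thing I would record is the optimality condition for the minimization defining $p$. Since $Q_\psi(\cdot,v)$ depends on $u$ only through $\phi(u) + \langle \gamma, u-v\rangle + \frac{1}{2\mu}\|u-v\|_2^2$ (the term $\psi(v)$ is constant in $u$), the fact that $p$ minimizes it gives a subgradient $\gamma_\phi \in \partial\phi(p)$ with
\[
\gamma_\phi + \gamma + \tfrac{1}{\mu}(p-v) = 0 .
\]

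Next I would lower-bound $\Phi(u) = \phi(u) + \psi(u)$ using convexity of each piece at the appropriate base point: convexity of $\phi$ at $p$ gives $\phi(u) \geq \phi(p) + \langle \gamma_\phi, u-p\rangle$, and convexity of $\psi$ at $v$ gives $\psi(u) \geq \psi(v) + \langle \gamma, u-v\rangle$. Adding these and subtracting $Q_\psi(p,v) = \phi(p) + \psi(v) + \langle \gamma, p-v\rangle + \frac{1}{2\mu}\|p-v\|_2^2$, the $\phi(p)$ and $\psi(v)$ terms cancel, and the two inner products involving $\gamma$ combine via $\langle \gamma, u-v\rangle - \langle \gamma, p-v\rangle = \langle \gamma, u-p\rangle$, leaving
\[
\Phi(u) - Q_\psi(p,v) \geq \langle \gamma_\phi + \gamma,\, u-p\rangle - \tfrac{1}{2\mu}\|p-v\|_2^2 .
\]
Substituting the optimality relation $\gamma_\phi + \gamma = \frac{1}{\mu}(v-p)$ turns the right-hand side into $\frac{1}{\mu}\langle v-p, u-p\rangle - \frac{1}{2\mu}\|p-v\|_2^2$.

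Finally I would invoke the hypothesis \eqref{lem:BT2.3-assump-g}, namely $\Phi(p) \leq Q_\psi(p,v)$, to replace $Q_\psi(p,v)$ by $\Phi(p)$ on the left, obtaining $\Phi(u) - \Phi(p) \geq \frac{1}{\mu}\langle v-p, u-p\rangle - \frac{1}{2\mu}\|p-v\|_2^2$. Multiplying by $2\mu$ and applying the elementary identity $2\langle v-p, u-p\rangle - \|v-p\|_2^2 = \|p-u\|_2^2 - \|v-u\|_2^2$ (which follows from expanding $\|u-v\|_2^2 = \|(u-p)+(p-v)\|_2^2$) yields exactly \eqref{lem:BT2.3-conclude-g}. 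The only genuinely delicate points are bookkeeping ones: applying convexity of $\phi$ and $\psi$ at the correct base points ($p$ versus $v$), and making sure the subgradient produced by the optimality condition for $\phi$ is the same one used in the convexity inequality. The concluding step is the standard ``completing the square'' identity and poses no real obstacle.
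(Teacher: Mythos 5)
Your proof is correct and follows essentially the same route as the paper's: the first-order optimality condition for $p_\psi(v)$, convexity of $\phi$ at $p_\psi(v)$ and of $\psi$ at $v$, the hypothesis $\Phi(p_\psi(v)) \leq Q_\psi(p_\psi(v),v)$, and the completing-the-square identity. The only difference is cosmetic ordering---you invoke the hypothesis at the end rather than at the start---which does not change the argument.
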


\begin{proof}
From \eqref{lem:BT2.3-assump-g}, we have \bea\label{lem:BT2.3-proof-eq-1}\ba{lll}\Phi(u) - \Phi(p_\psi(v)) & \geq & \Phi(u) - Q_\psi(p_\psi(v),v) \\
      & = & \Phi(u) - \left(\phi(p_\psi(v)) + \psi(v) + \langle \gamma_\psi(v),p_\psi(v)-v\rangle + \frac{1}{2\mu}\|p_\psi(v)-v\|_2^2\right).\ea\eea
Now since $\phi$ and $\psi$ are convex we have
\bea\label{lem:BT2.3-proof-eq-a} \phi(u) \geq \phi(p_\psi(v)) + \langle u-p_\psi(v), \gamma_\phi(p_\psi(v)) \rangle, \eea
\bea\label{lem:BT2.3-proof-eq-b} \psi(u) \geq \psi(v) + \langle u- v, \gamma_\psi(v)\rangle, \eea where $\gamma_\phi(\cdot)$ is a subgradient of $\phi(\cdot)$ and  $\gamma_\phi(p_\psi(v))$ satisfies the first-order optimality conditions for \eqref{def:p_g(v)}, i.e., \bea\label{def:p_g(v)-1st-opt-cond}\gamma_\phi(p_\psi(v)) + \gamma_\psi(v) + \frac{1}{\mu}(p_\psi(v)-v)=0.\eea Summing \eqref{lem:BT2.3-proof-eq-a} and \eqref{lem:BT2.3-proof-eq-b} yields
\bea\label{lem:BT2.3-proof-eq-2}\Phi(u) \geq \phi(p_\psi(v)) + \langle u-p_\psi(v), \gamma_\phi(p_\psi(v)) \rangle + \psi(v) + \langle u- v, \gamma_\psi(v)\rangle. \eea
Therefore, from \eqref{lem:BT2.3-proof-eq-1}, \eqref{def:p_g(v)-1st-opt-cond} and \eqref{lem:BT2.3-proof-eq-2} it follows that
\begin{align*} \Phi(u) - \Phi(p_\psi(v)) & \geq \langle \gamma_\psi(v)+\gamma_\phi(p_\psi(v)), u-p_\psi(v) \rangle - \frac{1}{2\mu}\|p_\psi(v)-v\|_2^2 \\
                                & = \langle -\frac{1}{\mu}(p_\psi(v)-v), u-p_\psi(v)\rangle - \frac{1}{2\mu}\|p_\psi(v)-v\|_2^2 \\
                                & = \frac{1}{2\mu}\left(\|p_\psi(v)-u\|^2-\|u-v\|^2\right).\end{align*}
\end{proof}

\begin{proof}[Proof of Theorem \ref{the:ALM-nonsmooth}]
Let $I$ be the set of all iteration indices until $k-1$-st for which no skipping
occurs and let $I_c$ be its complement. Let $I=\{n_i\}, \ i=0, \ldots, k_n-1$.
It follows that for all $n\in I_c$ $x^{n+1}=y^n$.

For $n\in I$ we can apply Lemma \ref{lem:BT2.3} to obtain
the following inequalities.
In \eqref{lem:BT2.3-conclude-g}, by letting $\psi=f$, $\phi=g$, $u=x^*$ and $u=x^{n+1}$, we
get $p_\psi(v)=y^{n+1}$, $\Phi=F$ and \bea\label{proof-ALM-1}2\mu(F(x^*)-F(y^{n+1}))\geq
\|y^{n+1}-x^*\|^2-\|x^{n+1}-x^*\|^2.\eea Similarly, by letting $\psi=g$, $\phi=f$,
$u=x^*$ and $v=y^{n}$ in \eqref{lem:BT2.3-conclude-g} we get $p_g(v)=x^{n+1}$, $\Phi=F$ and
\bea\label{proof-ALM-2}2\mu(F(x^*)-F(x^{n+1}))\geq
\|x^{n+1}-x^*\|^2-\|y^{n}-x^*\|^2.\eea Taking the summation of
\eqref{proof-ALM-1} and \eqref{proof-ALM-2} we get
\bea\label{proof-ALM-3}2\mu(2F(x^*)-F(x^{n+1})-F(y^{n+1}))\geq
\|y^{n+1}-x^*\|^2-\|y^n-x^*\|^2.\eea

For $n\in I_c$, \eqref{proof-ALM-1} holds, and we get
\bea\label{proof-ALM-3.5}2\mu(F(x^*)-F(y^{n+1}))\geq
\|y^{n+1}-x^*\|^2-\|y^n-x^*\|^2,\eea
due to the fact that $x^{n+1}=y^n$ in this case.

Summing \eqref{proof-ALM-3} and  \eqref{proof-ALM-3.5}
over $n=0,1,\ldots,k-1$ we get \begin{align}\label{proof-ALM-4}
& 2\mu((2|I|+|I_c|)F(x^*)-\sum_{n\in I}F(x^{n+1}) - \sum_{n=0}^{k-1}F(y^{n+1})) \\
\nonumber\geq & \sum_{n=0}^{k-1}\left(\|y^{n+1}-x^*\|^2-\|y^{n}-x^*\|^2\right) \\
\nonumber = & \|y^k-x^*\|^2-\|y^0-x^*\|^2 \\
\nonumber \geq & - \|x^0-x^*\|^2.\end{align}

For any $n$, since Lemma \ref{lem:BT2.3} holds for any $u$, letting $u=x^{n+1}$ instead of $x^*$ we get from \eqref{proof-ALM-1} that
\bea\label{proof-ALM-5}2\mu(F(x^{n+1})-F(y^{n+1}))\geq
\|y^{n+1}-x^{n+1}\|^2 \geq 0,\eea
 or, equivalently,
\bea\label{proof-ALM-6}2\mu(F(x^{n})-F(y^{n}))\geq
\|y^{n}-x^{n}\|^2 \geq 0.\eea

Similarly, for $n\in I$ by letting
$u=y^{n}$ instead of $x^*$ we get from \eqref{proof-ALM-2} that
\bea\label{proof-ALM-7}2\mu(F(y^{n})-F(x^{n+1}))\geq\|x^{n+1}-y^{n}\|^2 \geq 0.\eea
On the other hand, for $n\in I_c$,  \eqref{proof-ALM-7} also holds because
$x^{n+1}=y^n$, and hence holds for all $n$.

%By letting $u=v=y^{n+1}$ we get
%\bea\label{proof-ALM-7}2\mu(F(y^{n+1})-F(x^{n+2}))\geq\|x^{n+2}-y^{n+1}\|^2.\eea
%By letting $u=x^n,v=y^{n+1}$ we get
%\bea\label{proof-ALM-8}2\mu(F(x^n)-F(x^{n+1}))\geq\|x^n-x^{n+1}\|^2-\|x^n-y^{n+1}\|^2.\eea
Adding \eqref{proof-ALM-5} and \eqref{proof-ALM-7} we obtain
\bea\label{proof-ALM-9}2\mu(F(y^n)-F(y^{n+1})) \geq 0.\eea %Summing
%\eqref{proof-ALM-9} over $n=0,1,\ldots,k-1$ after multiplying by $n$
%yields \bea\label{proof-ALM-10}2\mu(-k
%F(y^k)+\sum_{n=0}^{k-1}F(y^{n+1}))\geq
%0.\eea
and adding \eqref{proof-ALM-6} and \eqref{proof-ALM-7} we obtain
\bea\label{proof-ALM-11}2\mu(F(x^{n})-F(x^{n+1}))\geq 0. \eea
\eqref{proof-ALM-9} and \eqref{proof-ALM-11} show that the sequences $F(y^n)$ and $F(x^n)$ are non-increasing. Thus we have,
\bea\label{proof-ALM-12} \sum_{n=0}^{k-1}F(y^{n+1})\geq kF(y^k)\quad \mbox{ and } \quad \sum_{n\in I}F(x^{n+1}) \geq k_n F(x^k).\eea

Combining \eqref{proof-ALM-4} and \eqref{proof-ALM-12} yields
\bea\label{proof-ALM-13}2\mu\left((k+k_n)F(x^*)-k_n F(x^k)-k F(y^k)\right)\geq -\|x^0-x^*\|^2.\eea From \eqref{proof-ALM-6} we know that $F(x^k)\geq F(y^k)$. Thus
\eqref{proof-ALM-13} implies that \beaa 2\mu (k+k_n) \left(F(y^k)-F(x^*)\right) \leq \|x^0-x^*\|^2, \eeaa which gives us the desired result \eqref{the:ALM-nonsmooth-conclude}.

Also, for any given $\epsilon>0$, as long as $k\geq \frac{L(f)\|x^0-x^*\|^2}{2\beta\epsilon}$, we have from \eqref{the:ALM-nonsmooth-conclude} that $F(y^k)-F(x^*)\leq\epsilon$; i.e., the number of iterations needed is $O(1/\epsilon)$ for an $\epsilon$-optimal solution.
\end{proof}

\end{document}